\newtheorem{theorem}{Theorem}[section]
\newtheorem{corollary}{Corollary}
\newtheorem{definition}{Definition}
\definecolor{dkgreen}{rgb}{0,0.6,0}
\definecolor{gray}{rgb}{0.5,0.5,0.5}
\definecolor{mauve}{rgb}{0.58,0,0.82}
\tiny\color{gray},
\title{A Tensorized Transformer for Language Modeling}
\author{%
  {Xindian Ma}\textsuperscript{\rm 1}, 
  {Peng Zhang}{\textsuperscript{\rm 1}}{\thanks{Corresponding Author: Peng Zhang}}~~, 
  {Shuai Zhang}\textsuperscript{\rm 1},\\
  {\textbf{Nan Duan}}\textsuperscript{\rm 2}\textbf{,} 
  {\textbf{Yuexian Hou}}\textsuperscript{\rm 1}\textbf{,}  
  {\textbf{Dawei Song}}\textsuperscript{\rm 3}\textbf{,}  
  {\textbf{Ming Zhou}}\textsuperscript{\rm 2}\\
  \textsuperscript{\rm 1}College of Intelligence and Computing, Tianjin University, Tianjin, China\\
  \textsuperscript{\rm 2}Microsoft Research Asia, Beijing, China\\
  \textsuperscript{\rm 3}School of Computer Science and Technology, Beijing Institute of Technology, Beijing, China\\
  \{xindianma, pzhang, szhang96, yxhou\}@tju.edu.cn\\
  \{nanduan, mingzhou\}@microsoft.com  \\
  \{dwsong\}@bit.edu.cn
}
\begin{document}

\maketitle

\begin{abstract}
Latest development of neural models has connected the encoder and decoder through a self-attention mechanism. In particular, Transformer, which is solely based on self-attention, has led to breakthroughs in Natural Language Processing (NLP) tasks. However, the multi-head attention mechanism, as a key component of Transformer, limits the effective deployment of the model to a resource-limited setting. In this paper, based on the ideas of tensor decomposition and parameters sharing, we propose a novel self-attention model (namely Multi-linear attention) with Block-Term Tensor Decomposition (BTD). We test and verify the proposed attention method on three language modeling tasks (i.e., PTB, WikiText-103 and One-billion) and a neural machine translation task (i.e., WMT-2016 English-German). Multi-linear attention can not only largely compress the model parameters but also obtain performance improvements, compared with a number of language modeling approaches, such as Transformer, Transformer-XL, and Transformer with tensor train decomposition.
\end{abstract}

\section{Introduction}
In NLP, Neural language model pre-training has shown to be effective for improving many tasks~\cite{devlin2018bert,peters2018deep}. Transformer~\cite{vaswani2017attention} is based solely on the attention mechanism, and dispensing with recurrent and convolutional networks entirely. At present, this model has received extensive attentions and plays an key role in many neural language models, such as BERT~\cite{devlin2018bert}, GPT~\cite{radford2018improving} and Universal Transformer~\cite{dehghani2018universal}.
However, in Transformer based model, a lot of model parameters may cause problems in training and deploying these parameters in a resource-limited setting. Thus, the compression of large neural pre-training language models has been an essential problem in NLP research. 

In literature, there are some compression methods~\cite{khrulkov2019tensorized,ye2018learning,han2015learning} proposed. 
When the vocabulary is large, the corresponding weight matrices can be enormous. 
Tensorized embedding (TE)~\cite{khrulkov2019tensorized} uses the tensor-train~\cite{oseledets2011tensor} to compress the embedding layers in Transformer-XL~\cite{dai2019transformer}, but has not compressed the attention layer.
Recently, Block-Term Tensor Decomposition(BTD)~\cite{de2008decompositions} is used to compress recurrent neural networks (RNNs)~\cite{ye2018learning}.
Ye \textit{et al.} \cite{ye2018learning} propose a compact flexible structure to deal with the large number of model parameters instead by high dimensional inputs in training recurrent neural networks (RNNs). 
This method greatly reduces the parameters of RNNs and improves their training efficiency. Still, the model only considers the input layer compression by the idea of low-rank approximation.
On the other hand, some methods~\cite{han2015learning,buci2006model} aim to develop a specific structure on its weight matrices and can reduce the parameters of the models. However, the new structure after compressing can not be integrated into the model~\cite{vaswani2017attention}.

In Transformer, the multi-head attention is a key part and it is constructed by a large number of parameters. 
Specifically, Ashish et.al~\cite{vaswani2017attention} compute the attention function on a set of queries simultaneously, packed together into a matrix $Q$, while the keys and values are also packed together into matrices $K$ and $V$, respectively. The attention function then adopts a no-linear function $softmax$ over two matrices $Q$ and $K$. There are two challenges to find a high-quality compression method to compress the multi-head attention in Transformer. 

First, the self-attention function in Transformer is a non-linear function, which makes it difficult to compress. In order to address this challenge, we first prove that the output of the attention function of the self-attention model~\cite{vaswani2017attention} can be linearly represented by a group of orthonormal base vectors. Then, by initializing a low rank core tensor, we use Tucker-decomposition~\cite{tucker1966some,li2017bt} to reconstruct a new attention representation, where $Q$, $K$ and $V$ can be considered as factor matrices. In order to construct the multi-head mechanism and compress the model, we use the method of Block-Term Tensor Decomposition (BTD), which is a combination of CP decomposition~\cite{carroll1970analysis} and Tucker decomposition~\cite{tucker1966some}. The difference is that three factor matrices $Q,~K$ and $V$ are shared in constructing each $3$-order block tensor. This process can reduce many parameters.
 
The second challenge is that the attention model after compressing can not be directly integrated into the encoder and decoder framework of Transformer~\cite{vaswani2017attention,dai2019transformer}.
In order to address this challenge, there are three steps as follows. First, the average of each block tensor can be computed; Second, multiple matrices can be given by tensor split. Third, the concatenation of these matrices can serve as the input to the next layer network in Transformer. After that, it can be integrated into the encoder and decoder framework of Transformer~\cite{vaswani2017attention,dai2019transformer} and trained end-to-end. 
Moreover, we also prove that the $3$-order tensor can reconstruct the scaled dot-product attention in Transformer by a sum on a particular dimension.

Our method combines two ideas which are the low-rank approximation and parameters sharing at the same time. Therefore, it achieves the higher compression ratios. Although the self-attention (i.e., scaled dot-product attention) in Transformer can be reconstructed, we do not consider reconstructing it and choose to split the $3$-order tensor (the output of Multi-linear attention) which is helpful for improving the accuracy in experiments. 
      
Our major contributions of this paper are as follows:
\begin{itemize}\setlength{\itemsep}{0pt}
  \item[1)] It is proved that the output of scaled dot-product attention (considering as a function) can be linearly represented by a group of orthonormal base vectors.  
  \item[2)] A novel self-attention method, namely Multi-linear attention, is provided, which combines two compression ideas, parameters sharing and low-rank approximation, together.
  \item[3)] Multi-linear attention builds the strong connection between three factor matrices (pack a set of queries, keys and values, respectively ), enhancing the ability of capturing sufficient attention information. We also prove our model can reconstruct the scaled dot-product attention in the original Transformer.
\end{itemize}

In order to validate the benefits of our model, we test it on two NLP tasks, namely language modeling and neural machine translation. In our experiments, the multi-head attention can be replaced by the proposed model, namely multi-linear attention.
We have observed that the standard Multi-head attention can be compressed with higher compression ratios on One-Billion dataset. As a result, we show that multi-linear attention not only considerably reduces the number of parameters, but also achieve promising experiments results, especially in language modeling tasks.

\section{Preliminaries} 
\label{previous:work}
Multi-linear attention is carried out in this paper. The analysis of Multi-linear attention relies on these concepts and results from the field of tensor decomositon and multi-head attention. 
We cover below in Section~\ref{BTD-decompositon} basic background on Block-Term tensor decomposition~\cite{de2008decompositions}. Then, we describe in Section~\ref{multi-head attention} multi-head attention~\cite{vaswani2017attention}.
\subsection{Tensor and Block-Term Tensor Decomposition}
\label{BTD-decompositon}
\textbf{Tensor} We use the Euler script letter $\mathcal{A}$ to denote a tensor which can be thought of as a multi-array. Thereby a vector and a matrix are a $1$-order tensor and $2$-order tensor, respectively.
The element in a $n$-order tensor is denoted as $\mathcal{A}_{d_1,\ldots,d_n}$. In the geometric representation of a tensor, $3$-order tensor can be represented by a cube. After that, there is a related concept named $tensor~slice$ that will be used in this paper. Tensor and some other related concepts are showed in Supplementary Materials A.

\noindent\textbf{Block-Term Tensor Decomposition (BTD)} Block-Term tensor decomposition is a combination of CP decomposition~\cite{carroll1970analysis} and Tucker decomposition~\cite{tucker1966some}. Given a $n$-order tensor $\mathcal{A} \in \mathbb{R}^{d_1\times \ldots \times d_n}$. A high-order tensor can be decomposed into $P$ block terms by the method named BTD. ${\bullet}_z$ is denoted as the tenor-tensor product on the $z$-$th$ order~\cite{kolda2009tensor} and $z\in\{1,\ldots,d\}$. Each term contains ${\bullet}_z$ between a core tensor $\mathcal{G}_i \in \mathbb{R}^{R_1 \times \ldots \times R_d}$ and $d$ factor matrices $\mathcal{X}_i^{(k)} \in \mathbb{R}^{d_k \times R_k}$, where $i \in [1,P]$ and $k \in [1, d]$. The formulation of BTD decomposition is as follows:
\begin{equation}
\label{BT-eq}
\mathcal{A} = \sum_{i=1}^{P} \mathcal{G}_i {\bullet}_1 \mathcal{X}_i^{(1)} {\bullet}_2 \mathcal{X}_i^{2} {\bullet}_3 \ldots {\bullet}_d \mathcal{X}_i^{(d)}
\end{equation} 
where $P$ is the CP rank, and $d$ is the Core-order. 
In our work, the tensor is $3$-order. Figure~\ref{BT_3-order} demonstrates the example of how a $3$-order tensor $\mathcal{A}$ can be decomposed into $P$ block terms.
\begin{figure}[t]
  \centering
  \includegraphics[width=3.5in]{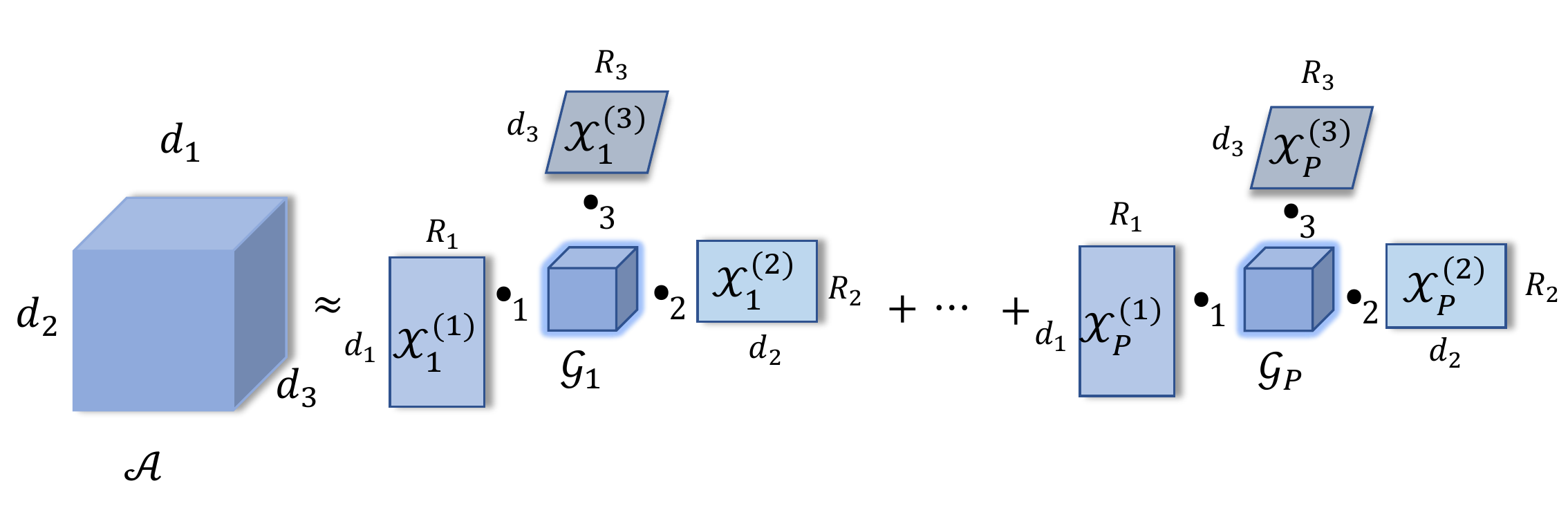} 
  \caption{The representation of Block-Term tensor decomposition for a $3$-order tensor. $\mathcal{A} \in \mathbb{R}^{d_1 \times d_2 \times d_3}$ is a $3$-order tensor, and can be approximated by $P$ Tucker decomposition. $P$ is the CP rank, and $R_1, R_2, R_3$ are the Tucker rank, respectively. In this paper, we assume that $R$=$R_1$=$R_2$=$R_3$.}
  \label{BT_3-order} 
\end{figure}

\subsection{Multi-head Attention}
\label{multi-head attention}
In Transformer, the attention function is named as ``Scaled Dot-Product Attention''. In practice, Transformer~\cite{vaswani2017attention} processes \textit{query, keys} and \textit{values} as matrices $Q$, $K$, and $V$ respectively. The attention function can be written as follows:
\begin{equation}
\label{attention}
Attention(Q,K,V) = softmax(\frac{QK^{T}}{\sqrt{d}})V
\end{equation}
where $d$ is the number of columns of $Q$ and $K$. In these work~\cite{vaswani2017attention,devlin2018bert,dai2019transformer}, they all use the multi-head attention, as introduced in~\cite{vaswani2017attention},
\begin{equation}
\begin{aligned}
MultiHeadAttention(Q,K,V) &=
 Concat(head_1,\ldots,head_k)W^{O}\\
where~head_i &= Attention(QW^{Q}_{i}, KW^{K}_{i},VW^{V}_{i})
\end{aligned}
\label{muti-head-attention}
\end{equation}
where matrices $W^{Q}_{i}$ and $W^{K}_{i}\in\mathbb{R}^{d_{model}\times{d}}$, $W_{i}^{V} \in \mathbb{R}^{d_{model}\times{d}}$ and $W^O \in \mathbb{R}^{hd_v\times d_{model}}$. In practice, $d_v$ is equal to $d$.
In this work~\cite{vaswani2017attention}, multiple groups of parameters ($W_i^{Q}$, $W_i^{K}$ and $W_i^{V}$) are used, which results in a large number of redundant parameters. 

\section{Tensorized Transformer}
In this section, we first build a Single-block attention in Figure~\ref{Model} (left) based on the Tucker decomposition, a low-rank decomposition method. In this process, we prove that the self-attention function in Transformer can be represented by a linear function, i.e., a linear combination representation of a set of basic vectors. 



In order to compress the multi-head mechanism, we propose a multi-linear attention constructed by a Block-Term tensor decomposition. This attention uses the idea of parameters sharing, i.e., sharing factor matrices across multiple blocks, shown in Figure~\ref{Model} (right). After that, the compression ratios and relatively lower complexity have been analyzed.


\begin{figure}[t]
\centering
\includegraphics[width=5.0in]{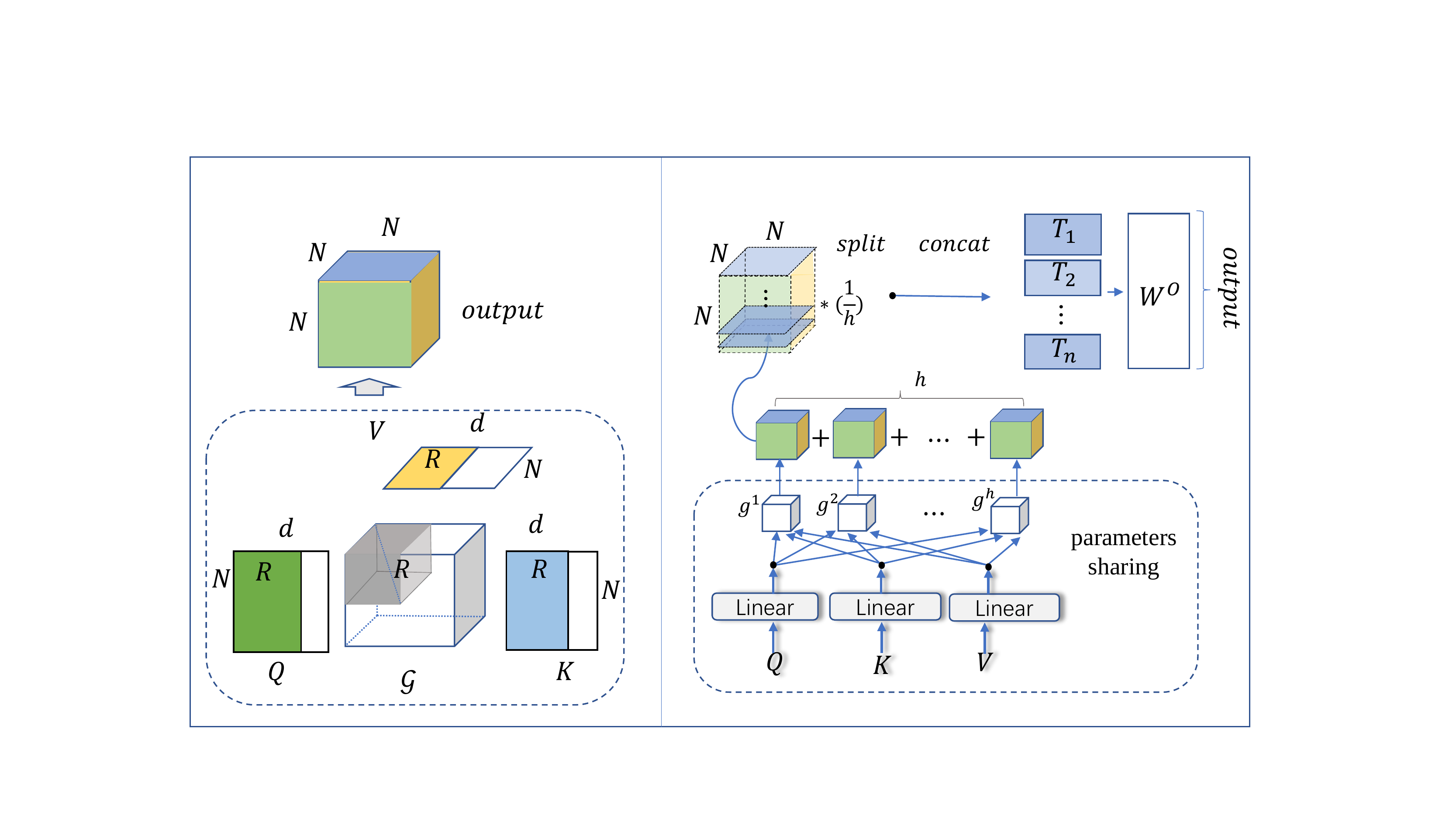}
\caption{(left) Single-block attention using Tucker decomposition. (right) Multi-linear attention based on Block-Term tensor decomposition.}
\label{Model}
\end{figure}
\subsection{Single-block Attention by Tucker Decomposition}
Before building the Single-block attention, it is necessary to propose the theorem~\ref{theorem1}. The theorem is closely related to attributes of Single-block attention function by Tucker decomposition~\cite{tucker1966some}. 

\begin{theorem}
\label{theorem1}
Let $\bm{e}_1, \ldots, \bm{e}_n$ be basis vectors from the vector space $S$. Assume that these vectors $\bm{e}_1,\ldots,\bm{e}_n$ are linear independent and $Q$,$K$,$V$ can be linearly represented by this set of basis vectors. The output of the attention function in Eq.~\ref{attention} can be represented by a linear combination of the set of these basis vectors.
\begin{equation}
\label{eq4}
Attention(Q,K,V) = (\bm{e}_1, \ldots, \bm{e}_n)M,
\end{equation}
where $M \in \mathbb{R}^{n\times d}$ is a coefficient matrix, and $d$ is a dimension of these matrices (i.e., $Q,~K$, and $V$).
\end{theorem}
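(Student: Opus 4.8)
The plan is to exploit the fact that the $n$ linearly independent vectors $\bm{e}_1,\ldots,\bm{e}_n$ form a genuine basis of the $n$-dimensional space $S$, so that the matrix collecting them as columns, $E = (\bm{e}_1,\ldots,\bm{e}_n)$, is square and invertible. The non-linearity of the softmax will not actually obstruct the argument, because we never need to linearize it; we only need to observe that the attention output is a matrix whose columns all lie in $S$, and every vector of $S$ is by definition a linear combination of the basis.

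First I would use the hypothesis that $Q$, $K$, $V$ are linearly representable by the basis to write $Q = E A_Q$, $K = E A_K$, and $V = E A_V$ for coefficient matrices $A_Q, A_K, A_V \in \mathbb{R}^{n \times d}$. Substituting into Eq.~\ref{attention} gives $QK^{T}/\sqrt{d} = E A_Q A_K^{T} E^{T}/\sqrt{d}$, an $n \times n$ matrix. Applying the (row-wise) softmax produces a fixed $n \times n$ matrix, which I would name $S := \text{softmax}(QK^{T}/\sqrt{d})$. The essential point is that from here on $S$ is just a constant matrix of real numbers: all of the non-linear dependence on the entries of $Q$ and $K$ has been evaluated and frozen into $S$.

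Next I would compute $Attention(Q,K,V) = S V = S E A_V$, which is $n \times d$, so each of its $d$ columns is a vector in $\mathbb{R}^n \cong S$. Since $\{\bm{e}_i\}$ spans $S$, each such column is a linear combination of $\bm{e}_1,\ldots,\bm{e}_n$; collecting these coefficients column by column into a matrix $M \in \mathbb{R}^{n \times d}$ yields $Attention(Q,K,V) = (\bm{e}_1,\ldots,\bm{e}_n) M$, which is exactly Eq.~\ref{eq4}. Equivalently, using invertibility of $E$, I would exhibit $M$ explicitly as $M = E^{-1} S E A_V$, which also shows that the representation is unique.

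The step I expect to require the most care is conceptual rather than computational: articulating why the non-linearity of softmax is harmless. The tempting but wrong move is to try to push the softmax through the basis decomposition, which fails because softmax is not linear. The resolution is to treat $S$ as a fixed numerical matrix and to base the linear-combination claim solely on the fact that the output columns lie in the span of a complete basis, letting the coefficient matrix $M$ absorb the entire non-linear contribution. I would also be careful to justify that $E$ is invertible (square, with linearly independent columns) so that $M$ is well defined, and to keep the distinction between the $n$-dimensional column space spanned by the $\bm{e}_i$ and the feature dimension $d$ straight throughout.
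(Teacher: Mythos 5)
Your proof is correct under your (reasonable) reading of the hypotheses, but it takes a genuinely different route from the paper's. The paper writes $Q=E(\bm{\alpha}_1,\ldots,\bm{\alpha}_d)$, $K=E(\bm{\beta}_1,\ldots,\bm{\beta}_d)$, $V=E(\bm{\xi}_1,\ldots,\bm{\xi}_d)$ and then \emph{commutes the softmax with the basis change}, claiming $softmax(QK^T/\sqrt{d})=E\,softmax(A/\sqrt{d})\,E^T$ with $A=(\bm{\alpha}_i)(\bm{\beta}_j)^T$, so that after the cancellation $E^TE=I$ (implicitly assuming orthonormality) one gets the explicit coefficient matrix $M=softmax(A/\sqrt{d})(\bm{\xi}_1,\ldots,\bm{\xi}_d)$. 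That commutation step is exactly the move you identify as "tempting but wrong": softmax is not linear, and the identity does not hold in general, so the paper's derivation is heuristic at that point. You avoid the issue entirely by freezing $S=softmax(QK^T/\sqrt{d})$ as a constant matrix and writing $M=E^{-1}SEA_V$; this is rigorous, and it is the honest way to prove the statement as literally written. What you lose relative to the paper is the structured form of $M$ as (normalized coefficients of $Q,K$) times (coefficients of $V$), which is what the authors actually use to motivate the Tucker/core-tensor construction in Eq.~\ref{New_Attention}; what you gain is a proof with no gap. Two small cautions: (i) your argument really does require $E$ to be a complete basis of the ambient $\mathbb{R}^N$ (square and invertible) --- if the $\bm{e}_i$ spanned only a proper subspace, the columns of $SEA_V$ need not stay in that subspace, since left-multiplication by $S$ does not preserve the column span of $E$; (ii) under that complete-basis reading the theorem becomes essentially vacuous (every $N\times d$ matrix is $EM$ for $M=E^{-1}(\cdot)$), which is worth acknowledging explicitly so the reader understands that the nontrivial content the authors intend lies in the explicit factorized form of $M$, not in mere membership in the span.
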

\begin{proof}
The proof can be found in Supplementary Materials B. 
\end{proof}


In Figure~\ref{Model} (left), it is a schematic diagram about the Single-block attention. First, we assume that the query, key and value can be mapped into three factor matrices of which are composed of three groups of orthogonal basis vectors. Three factor matrices are $Q$, $K$ and $V$. After that, we can construct a new attention (i.e., Single-block attention) by initializing a $3$-order diagonal tensor (trainable) which is the $\mathcal{G}$. In Figure~\ref{Model} (left), $R$ is the rank about the tensor, $N$ is the length of a sequence, and $d$ is the dimension of matrix. The function of Single-block attention can be computed based on Tucker-decomposition as follows:
\begin{equation}
\label{New_Attention}
\begin{aligned}
Atten_{TD}(\mathcal{G};Q,K,V) =& \mathcal{G} {\bullet}_1 Q {\bullet}_2 K {\bullet}_3 V \\
=& \sum_{i=1}^{I}\sum_{j=1}^{J} \sum_{m=1}^{M} \mathcal{G}_{ijm} Q_i \circ K_j \circ V_m 
\end{aligned}
\end{equation} 
where $\mathcal{G}$ is a core tensor. $i, j$ and $m$ are the indexes of the core tensor. $\circ$ is the outer product. ${\bullet}_z$ is the same definition in Eq.~\ref{BT-eq}. $Q_i, K_j$ and $V_k$ are column vectors from matrices $Q, K$ and $V$, where $Q \in \mathbb{R}^{N \times d}$, $K \in \mathbb{R}^{N \times d}$ and $V \in \mathbb{R}^{N \times d}$, and $N$ is the length of a sequence. In practice, we set $I$=$J$=$M$=$R$. The core tensor $\mathcal{G}$ can be defined as follows,
\begin{equation}
\label{core_tensor}
\mathcal{G}_{ijm} =
 \left\{  
        \begin{array}{lr}
          rand(0,1) & i=j=m \\
          0 & otherwise\\
        \end{array} 
\right.
\end{equation} 
where the $rand(0,1)$ is a random function, and the diagonal entries of core tensor $\mathcal{G}$ form the vector $\bm{g}$. Each entry $\bm{g}_r\in (0,1)$, $r \in \{1, \ldots, R\}$. We can consider $\bm{g}$ as the trainable weight. In experiments, we compute the weight vector by $softmax$ function (i.e., $softmax(\bm{g})$).


After that, the output of Single-block attention function is a $3$-order tensor which is given by linear computation. The Single-block attention (i.e., a $3$-order tensor with Tucker decomposition) can reconstruct the {\it{Scaled Dot-Product attention}} in Eq.~\ref{attention} by the summing over the tensor according to the second index~\footnote{If the coordinates of a $3$-order tensor are $i,~j$ and $m$, $j$ is the second index.} (it can be seen as the coordinates in the vertical direction for a tensor), as proved in the following corollary. Note that in our model, we do not adopt the above reconstructing process. Instead, to obtain a new representation, we adopt the concat method after the tensor splitting (see Sec.~\ref{sec:sec:multiBT}). We will further show the compression ability of the Single-block attention in Sec.~\ref{sec:sec:compressanalysis}.



\begin{corollary}
\label{corollary}
Under the same conditions as in Theorem~\ref{theorem1} and the value of $N$ is equal to the value of $d$, Single-block attention representation Eq.~\ref{New_Attention} can reconstruct the {\it{Scaled Dot-Product attention}} in Eq.~\ref{attention} by the summing over the tensor (i.e., the output of Single-block attention function) according to the second index. It holds that:
\begin{equation}
\label{dddd}
Attention(Q,K,V)_{i,m} = \sum_{j=1}^{N} Atten_{TD}(\mathcal{G};Q,K,V)_{i,j,m}
\end{equation}
where $i$, $j$ and $m$ are the indices of the Single-block attention's output (i.e., a 3-order tensor). $Atten_{TD}(\cdot)$ is the function of Single-block attention based on Tucker decomposition. $i$ and $m$ are the indices of outputs (i.e., a matrix) from Eq.~\ref{attention}.
\end{corollary}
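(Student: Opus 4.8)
The plan is to reduce both sides of Eq.~\ref{dddd} to explicit entrywise expressions and then match them through the linear representation guaranteed by Theorem~\ref{theorem1}. First I would expand the Single-block attention tensor using the fact that the core tensor $\mathcal{G}$ is diagonal (Eq.~\ref{core_tensor}): only the terms with equal indices survive, so Eq.~\ref{New_Attention} collapses to the rank-$R$ form $Atten_{TD}(\mathcal{G};Q,K,V) = \sum_{r=1}^{R} \bm{g}_r\, Q_r \circ K_r \circ V_r$, whose $(a,b,c)$ entry is $\sum_{r=1}^{R}\bm{g}_r\, Q_{ar}K_{br}V_{cr}$, where $Q_{ar}$ denotes the $a$-th component of the $r$-th column $Q_r$ (and similarly for $K$ and $V$).

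Next I would carry out the summation over the second index. Summing that entry over $b=1,\ldots,N$ pulls the column sums of $K$ out of the product, giving $\sum_{b=1}^{N} Atten_{TD}(\mathcal{G};Q,K,V)_{a,b,c} = \sum_{r=1}^{R}\bm{g}_r\left(\sum_{b=1}^{N}K_{br}\right) Q_{ar}V_{cr}$. This is a bilinear form in the columns of $Q$ and $V$ with scalar weights $\bm{g}_r\sum_{b}K_{br}$, so the remaining task is purely to identify this matrix with the Scaled Dot-Product attention.

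For that identification I would invoke Theorem~\ref{theorem1}, which writes the attention output as $Attention(Q,K,V)=(\bm{e}_1,\ldots,\bm{e}_n)M$, i.e.\ entrywise $Attention(Q,K,V)_{a,c}=\sum_{r=1}^{n}(\bm{e}_r)_a M_{rc}$. Taking $R=n$ and choosing the Tucker factors so that the columns of $Q$ are the basis vectors $\bm{e}_r$ and $V=M^{T}$, the two expressions coincide provided $\bm{g}_r\sum_{b=1}^{N}K_{br}=1$ for every $r$. Since the diagonal weights satisfy $\bm{g}_r\in(0,1)$ and the factor matrix $K$ is otherwise free, its columns can be scaled to the prescribed sums $1/\bm{g}_r$, so this normalization is always attainable. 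The hypothesis $N=d$ is exactly what makes the shapes match: the summed tensor is indexed by $(a,c)$ with $a,c\in\{1,\ldots,N\}$, whereas the attention output lies in $\mathbb{R}^{N\times d}$, so the reconstruction is dimensionally consistent only when $d=N$.

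The step I expect to be the main obstacle is conceptual rather than computational: the Scaled Dot-Product attention contains the nonlinear $softmax$, which cannot emerge from the purely multilinear contraction of Eq.~\ref{New_Attention} on its own. The resolution is that Theorem~\ref{theorem1} has already absorbed that nonlinearity into the coefficient matrix $M$ and the basis $\bm{e}_1,\ldots,\bm{e}_n$; once this representation is available, the corollary reduces to the index bookkeeping above. I would therefore be careful to phrase the result as an existence claim about a choice of factor matrices consistent with Theorem~\ref{theorem1}, rather than an identity holding for arbitrary $Q,K,V$ in Eq.~\ref{New_Attention}.
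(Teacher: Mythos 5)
Your proposal follows the same computational backbone as the paper's own proof: collapse the Tucker contraction using the diagonal core tensor to get $\mathcal{A}_{i,j,m}=\sum_{r=1}^{R}\mathcal{G}_{r,r,r}Q_{i,r}K_{j,r}V_{m,r}$, then sum over the second index $j$ so that the column sums of $K$ factor out. Where you genuinely diverge is in the final identification with the Scaled Dot-Product attention. The paper absorbs the softmax into a ``normalization factor matrix'' $\Theta$ asserted to contain the nonzero entries of $\mathcal{G}$, writes the attention entrywise as $X_{i,m}=\sum_{k}\sum_{r}\Theta_{i,m}Q_{i,r}K_{k,r}V_{k,m}$, and then simply equates this with the summed tensor $\sum_{r}\sum_{j}\mathcal{G}_{rrr}Q_{i,r}K_{j,r}V_{m,r}$ --- note that these two expressions contract $V$ along different axes ($V_{k,m}$ with $k$ summed over positions versus $V_{m,r}$ with $r$ summed over ranks), and the paper never reconciles this beyond invoking $N=d$. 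Your version instead goes back to Theorem~\ref{theorem1}'s representation $(\bm{e}_1,\ldots,\bm{e}_n)M$ and exhibits an explicit choice of factors (columns of $Q$ as the basis vectors, $V=M^{T}$, columns of $K$ rescaled so that $\bm{g}_r\sum_{b}K_{br}=1$), and you correctly flag that the statement must be read as an existence claim over admissible factor matrices rather than an identity for arbitrary inputs. This buys you a proof that actually closes the gap the paper leaves open, at the modest cost of making the reconstruction conditional on a particular factorization --- which is the honest reading of what ``can reconstruct'' means here. Your observation that the softmax nonlinearity is pre-absorbed by Theorem~\ref{theorem1} is exactly the role $\Theta$ plays in the paper's argument, so the two proofs agree on where the nonlinearity goes; yours is simply more explicit about it.
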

\begin{proof}
The proof can be found in Supplementary Materials C.
\end{proof}

\subsection{Multi-Linear Attention by Block-Term Tensor Decomposition}
\label{sec:sec:multiBT}
In order to construct the multi-head mechanism and compress the parameters of multiple groups of mapping, we use a group of linear projections, and share the output from the linear projections. In Figure~\ref{Model}(right), the learned linear projection can map queries, keys and values to three matrices which are composed of basis vectors. After that, we use the Block-Term tensor decomposition to build multi-head mechanism. In our work, our model is named as Multi-linear attention, which can be formulated as follows:
\begin{equation}
\begin{aligned}
MultiLinear(\mathcal{G};Q,K,V) &= SplitConcat(\frac{1}{h}*({T}_1+ \ldots +{T}_h))W^{O} \\
where~~{T_j} &= Atten_{TD}(\mathcal{G}_j;QW^{q},KW^{k},VW^{v})
\end{aligned}
\label{Multi_linear_attention}
\end{equation}
where the core tensor $\mathcal{G}_j$ is a diagonal tensor, and the number of parameter in $\mathcal{G}_j$ is equal to the rank of core tensor, $j\in \{1,\ldots, h\}$. $\mathcal{G}$ is the set of the core tensors. $SplitConcat(\cdot)$ is a function which achieves the concatenation after splitting for a $3$-order tensor. Figure~\ref{Model} (right) shows the basis idea about the multi-linear attention. The $W^{O}$ is the parameter matrix which is a full connection layer and correlated to the output of Multi-linear attention. $Atten_{TD}(\cdot)$ is the function of Single-block attention, which is a part of Multi-linear attention. $W^{q}$, $W^{K}$ and $W^{v}$ are the parameters matrices which are shared in constructing Multi-linear attention.

The Multi-linear attention is a compression model. After compressing the multi-head attention in Transformer, it is to achieve a Tensorized Transformer. The Multi-linear attention can be incorporated into Transformer architecture. A diagram which is about the incorporating of Multi-linear attention in partial Transformer structure is given in Supplementary Materials E.1.

\subsection{Analysis of Compression and Complexity}
\label{sec:sec:compressanalysis}
\textbf{Compression} 
Our focus is on the compression of the multi-head mechanism in the multi-head attention of Transformer. Previous work~\cite{vaswani2017attention} gets the multi-head attention by multiple groups of linear mappings. We use three linear mappings for matrices $Q$,~$K$ and $V$, respectively. For the output of three mappings, we choose to share them which are considered as three factor matrices in reconstructing the Multi-linear attention. This process is shown in Figure~\ref{Model} (left). $h$ is the number of heads in~\cite{vaswani2017attention}, and $d$ is the dimension of factor matrices. The compression ratios can be computed by 
$({3 \times h \times d})/({3 \times d + h})$. In practice, $h$ is normally set to $8$, $d$ is set to $512$. In this case, the compression ratios can achieve $8$. In other words, we can reduce almost $8$ times parameters in the attention layer.
The details of the computing of compression ratios can be found in Supplementary Materials D.
The Transformer also contains other network layers, such as Position-wise feed forward network and embedding layers et al. Therefore,
for the compression ratios in whole Transformer, we can compare it by the analysis of experimental results for model parameters.

\textbf{Complexity} The time complexity of the attention function in Eq.~\ref{attention} is $\mathcal{O}(N^2d)$, $N$ is the length of a sequence, and $d$ is the representation dimension. In Multi-linear attention, we can reorder the computations to receive the model complexity $\mathcal{O}(N^3)$, where $N$ is also the length of the sequence. The minimum number of sequential operations in Multi-linear attention for different layers is approximately equal to the self-attention in Transformer~\cite{vaswani2017attention}. 

\section{Related Work}

The field of language modeling has witnessed many significant advances. Different from the architectures of convolutional neural network (CNNs) and recurrent neural networks (RNNs) language modeling, the Transformer~\cite{vaswani2017attention} and its variants~\cite{dai2019transformer,devlin2018bert,dehghani2018universal} achieve excellent results in language modeling processing. Transformer networks have a potential of learning long-term dependency, but are limited by a fixed-length context in the setting of language modeling. Vaswani \textit{et al.}~\cite{vaswani2017attention} uses a segment-level recurrence mechanism and a novel positional encoding scheme to resolve this question. BERT~\cite{devlin2018bert} is a kind of bidirectional encoder representations from transformers. It is designed to pre-train deep bidirectional representation and obtains new SoTA on some NLP tasks. Although these methods have achieved great results, a large number of parameters make it difficult for the model to be trained in limited resources. Transformer fails to generalize in many simple tasks, e.g. copying string and logical inference~\cite{dehghani2018universal}. Universal Transformers~\cite{dehghani2018universal} propose a self-attentive recurrent sequence model which addresses this problem. This methods can increase the training speed. In their work, authors following weight sharing found in CNNs and RNNs, extend the Transformer with a simple form of weight sharing that strikes an effective balance between induces and model expressivity. This methods also uses a large number of parameters. 

Therefore, it is very important to consider how to reduce the amount of memory and computing they need. As we know, existing model compression methods are mainly divided into parameter pruning and sharing~\cite{han2015learning}, low rank approximation~\cite{sainath2013low}, knowledge transfer~\cite{buci2006model}, and transferred convolutional filters~\cite{cohen2016group}.  
Currently, tensor decomposition methods are used to decompose a high-order tensor, which can get different neural network language model structures~\cite{zhang2019generalized,zhang2018quantum}. Besides, tensor decomposition methods which adopts the idea of low rank approximation in most cases, have been successfully applied to neural networks compression. For example, in literature~\cite{denton2014exploiting,jaderberg2014speeding}, researchers approximate a tensor by minimizing the reconstruction error of the original parameters on convolutional neural networks (CNNs). However, these approaches tend to accumulate errors when multiple layers are compressed sequentially, and the output feature maps deviate far from the original values with the increase of compressed layers. Our compression method uses the idea of parameters sharing in the constructing of attention layers, and the size of output is same as the output from self-attention in Transformer which can effectively avoid these problems.
Tensorizing Neural Networks~\cite{novikov2015tensorizing} have combined the idea of reshaping weights of fully-connected layers into high-dimensional tensors and representing them in Tensor Train format~\cite{oseledets2011tensor}. This approach was later extended to convolutional~\cite{garipov2016ultimate} and recurrent neural networks~\cite{yang2017tensor}.
Recently, in these work~\cite{chen2018groupreduce,variani2018west}, researchers introduce efficient compression methods for the embedding and $softmax$ layers based on structured low rank matrix approximation. TT-embedding~\cite{khrulkov2019tensorized} aims to compression the larger embedding layer on Transformer-XL~\cite{dai2019transformer}. Sparse Transformer~\cite{Rewon2019sparse} adopts sparse techniques on the attention matrix and reduces its parameters. This work uses a sparse attention matrix by selecting the information on some positions in the attention matrix, but does not change the mechanism of the attention.
Our method is different from these works, and combines two compression idea (low rank approximate and parameters sharing) to construct a tensorized Transformer. 


In our work, we focus on the compression the multi-head attention in Transformer based the idea of parameters sharing. At the same time, we also combine low-rank approximate method to reduce parameters and computation complexity. 

\section{Experiments}
Transformer is a versatile and powerful modeling tool and widely is used in various natural language process tasks. In order to verify the effectiveness of our method (i.e., Multi-linear attention) replacing multi-head attention in Transformer, we carry out two NLP tasks named language modeling (LM) and neural machine translation (NMT). Code\footnote{https://github.com/szhangtju/The-compression-of-Transformer} for running experiments has been released, and the key code which is about our method can be found in Supplementary Materials F.

\subsection{Language Modeling}
Language modeling is the task of predicting the next word in a sentence. This task is to estimate the joint probability $p(s)$ of a sentence of tokens $s$=$(w_1,\ldots, w_n)$. The resulting models can be used to generate text or further fine-tuned to solve other NLP tasks~\cite{radford2018improving}. 
In this paper, we employ the standard setting of predicting next token given the sequence of preceding tokens, based on the function $p(s)=p(w_1)\prod_{i=2}^n p(w_i|w_1,\ldots,w_{i-1})$. We chose three datasets in the order of small (i.e., PTB), medium (i.e., WikiText-103) and large (i.e., One-Billion). Models are evaluated based on Perplexity (PPL), which is the average per-word log-probability. The lower the PPL, the better the model is.

Specially, we take Transformer, the open source state-of-the art language modeling architecture, and replace the standard multi-head attention layers with our Multi-linear attention. Then, we test different model configurations on the PTB~\cite{mikolov2011empirical}, WikiText-103~\cite{merity2016pointer} and One-Billion Word benchmark~\cite{chelba2013one} datasets and report the results in Table~\ref{Tabel1} and Table~\ref{result}. 

\begin{table*}[ht]\small
\centering 
\caption{Results (PPL) and model parameters with state-of-the-art results on One-Billion. Tensorized Transformer is our model. The core-1 is that the model use Single-block term tensor. Analogously, the core-2 is that two block term tensor is used.}
\begin{tabular}{ccccc}
\toprule[1pt]
\textbf{Model} & \textbf{Params} & \textbf{Test PPL}\\
\midrule[0.5pt]
RNN-1024+9 Gram~\cite{chelba2013one} & 20B & 51.3\\
LSTM-2018-512~\cite{jozefowicz2016exploring} & 0.83B & 43.7 \\
GCNN-14 bottleneck~\cite{dauphin2017language} &--& 31.9 \\
LSTM-8192-1024+CNN Input~\cite{jozefowicz2016exploring}  &1.04B&30.0 \\
High-Budget MoE~\cite{shazeer2017outrageously} &5B &28.0 \\
LSTM+Mos~\cite{yang2017breaking} & 113M & 37.10 \\
\hline
Transformer+adaptive input~\cite{baevski2018adaptive} & 0.46B & 23.7 \\
Transformer-XL Base~\cite{dai2019transformer} &0.46B& 23.5 \\
Transformer-XL Large~\cite{dai2019transformer} & 0.8B& 21.8 \\
\hline 
Tensorized Transformer core-$1$ &0.16B& 20.5 \\
Tensorized Transformer core-$2$ &0.16B& \textbf{19.5} \\
\bottomrule[1pt]
\end{tabular}
\label{Tabel1}
\end{table*}

\begin{table*}[pthb]\small
\renewcommand\arraystretch{1.2}
  \centering
  \begin{tabular}{llllllll}
    \toprule[1pt]
    \multirow{2}{*}{\textbf{Model}}& \multicolumn{3}{c}{\textbf{PTB}} & \multicolumn{3}{c}{\textbf{WikiText-103}}\\
    \cline{2-7}
    &Params&Val PPL&Test PPL&Params&Val PPL&Test PPL\\
    \hline
    LSTM+augmented loss~\cite{inan2016tying}    & 24M & 75.7 & 48.7   & -- & -- & 48.7  \\
    Variational RHN~\cite{zoph2016neural} & 23M & 67.9 & 65.4  & --& -- & 45.2 \\
    4-layer QRNN~\cite{merity2018analysis} & -- &--& --  &151M &--& 33.0\\
    AWD-LSTM-MoS~\cite{yang2017breaking} & 22M &58.08& 55.97  & -- &29.0& 29.2 \\
    \hline
    Transformer+adaptive input~\cite{baevski2018adaptive} & 24M & 59.1 & 57     & 247M & 19.8 & 20.5\\
    Transformer-XL-Base~\cite{dai2019transformer} & 24M &56.72& 54.52               &151M &23.1& 24.0\\
    Transformer-XL-Large~\cite{dai2019transformer} &--& -- &--                      &257M &--& \textbf{18.3} \\
    Transformer-XL+TT~\cite{khrulkov2019tensorized}& 18 M & 57.9* & 55.4*           &130M &23.61* & 25.70* \\
    Sparse Transformer~\cite{Rewon2019sparse} & 14M &74.0*&73.1*                 & 174M & 38.98* &40.23* \\
    \hline
    Tensorized Transformer core-$1$ & 12M& 60.5 & 57.9 &  85.3M &22.7&20.9\\
    Tensorized Transformer core-$2$ & 12M& \textbf{54.25} & \textbf{49.8} &  85.3M &\textbf{19.7}&18.9\\
    \bottomrule[1pt]
  \end{tabular}
  \caption{Results and compression with state-of-the-art results on PTB and WikiText-103. '$-$' indicates no reported results in that setting, '$*$' indicates that the results is our own implementation. }
  \label{result}
\end{table*}


\subsection{Results and Details}
PTB has $929k$ training tokens, $73k$ validation words, and $82k$ test words. The results is reported in Table~\ref{result}. Similar to AWD-LSTM-MoS~\cite{yang2017breaking}, we apply variational dropout and weight average to our model (i.e., Tensorized Transformer). In addition, we need to state that, our model only replaces the multi-head attention using Multi-linear attention structure, and the other structures remain the same. We compare the results of our model with other models. Our model achieves the comparable results with SoTA when the number of core tensor is equal to two. However, our model size (i.e, model parameters) reduces by nearly half comparing with Transformer and Transformer-XL. 

WikiText-103 contains 267,735 unique tokens. The dataset is available word-level language modeling benchmark with long-term dependency. It contains 103M training tokens from $28k$ articles, with an average length of 3.6k tokens per article, which allows testing the ability of long-term dependency modeling. 
As shown in Table~\ref{result}, our model get the perplexity of $18.9$, which is a comparable experimental result with the previous SoTA perplexity $18.3$ , which demonstrates the effectiveness of the proposed attention architecture. 

The One-Billion Word benchmark is a large dataset derived from a news site. The dataset consists of $829,250,940$ tokens over a vocabulary of $793,471$ words. In this dataset, sentences are shuffled and hence the context is limited. Consequently, this dataset mainly tests the ability of modeling only short-term dependency. The comparison between Tensorized Transformer and the other methods are shown in Table~\ref{Tabel1}. Although Tensorized Transformer is mainly designed to better compress Transformer or Transformer-XL model, it dramatically improves the single-model SoTA from \textbf{21.8} to \textbf{19.5}. Specifically, Tensorized Transformer significantly outperforms a contemporary method using vanilla Transformers~\cite{vaswani2017attention}, suggesting that the advantage of the tensorized Transformer is also generalizable to modeling short sequences.

Table~\ref{result} and Table~\ref{Tabel1} show that our model get the lower PPL than other models in three datasets. An exciting observation is that our model has much fewer parameters.
The model of Transformer-XL+TT~\cite{khrulkov2019tensorized} is a recent compression model with Tensor Train to compress the input embedding layers only. Sparse Transformer~\cite{Rewon2019sparse} uses the method of sparse attention matrix to compress Transformer model. The results in Table~\ref{result} show that compared with Transformer-XL+TT, our method has much fewer parameters, and better language modeling performance. 
These results verify that our model (i.e., Multi-linear attention) is effective in language modeling tasks, and has performed well for the model compression.
Other details (such as hyperparameters and Hardware) can be found in Supplementary Materials E.

\subsection{Neural Machine Translation}
The goal is to map an input sequence $s=(x_1,x_2,\ldots,x_n)$ representing a phrase in one language, to an output sequence $y=(y_1,y_2,\ldots, y_m)$ representing the same phrase in a different language. In this task, we have trained the Transformer model~\cite{vaswani2017attention} on WMT 2016 English-German dataset~\cite{sennrich2016edinburgh}. Sentences were tokenized using the SentencePiece~\footnote{https://github.com/google/sentencepiece}. For our experiments, we have replaced each of the attention layers with Multi-linear attention in Encoder. For evaluation we used beam search with a beam size of 5 and length penalty $\alpha$=$0.6$. In this section, we only compared the results with Transformer~\cite{vaswani2017attention}. Our results are summarized in Table~\ref{BLEU}. $*$ indicates that the result is our own implementation. 

In Table~\ref{BLEU}, we select two baseline models. The Base-line~\cite{sennrich2016edinburgh} is first model in WMT 2016 English-German dataset.
For the other baseline, we use the basic Transformer architecture~\cite{vaswani2017attention}. The BLEU score is $34.5$ for the basic architecture.
We carry out two Tensorized Transformer structures, namely core-1 and core-2 respectively. When Tensorized Transformer core-1 and core-2 are used, the BLEU scores are $34.10$ and $34.91$, which achieves better performance over Transformer. As for the reported model parameter size, our model uses less parameters. 

\begin{table}\small
\centering 
\caption{Results and compression with Transformer on WMT-16 English-to-German translation.}
\begin{tabular}{ccccc}
\toprule[1pt]
\textbf{Model} & \textbf{Params} & \textbf{BLEU}\\
\midrule[0.5pt]
Base-line~\cite{sennrich2016edinburgh} &--&26.8\\
Linguistic Input Featurec~\cite{sennrich2016linguistic} &--& 28.4 \\

Attentional encoder-decoder + BPE~\cite{sennrich2016edinburgh} &--&34.2  \\
Transformer~\cite{vaswani2017attention} &52M& 34.5* \\
\hline 
Tensorized Transformer core-$1$ &21M& 34.10 \\
Tensorized Transformer core-$2$ &21.2M& \textbf{34.91} \\
\bottomrule[1pt]
\end{tabular}
\label{BLEU}
\end{table}

\subsection{Discussion}

We have shown the results on language modeling and neural machine translation tasks using the Multi-linear attention. For the compression of the model parameters, although we report the parameters of the whole model structure, our method mainly considers the compression of multi-head attention but has not changed other layers in Transformer. Regarding the rationale for the improvements, in Corollary 1, we prove that the output of the original attention can be represented by summing over the 3-order tensor. In Figure 2, we use a concat function over these matrices from tensor splitting. The operation of concat can model all values in the 3-order tensor, and thus captures more information than sum operator. Another reason could be the alleviation of overfitting by reducing parameters. The overfitting will appear when the number of the core tensor is greater than 2. Besides, according to our experiments, relatively large dimensions of the word embedding can lead to overfitting, resulting in performance degradation. Therefore, our model requires a relatively small dimension of the embedding, compared with the original Transformer. In order for a more systematic evaluation, we report more experiments and analyses in Supplementary Materials E.4.

\section{Conclusion and Further Work}
We have proposed a novel self attention encoder layer, namely the Multi-linear attention, to compress the original multi-head attention and derive a novel encoding scheme. 
Our main contribution lies in a structure of Tensorized Transformer based on Block-Term tensor decomposition which is represented by the combination of a group of $3$-order tensors, with low-rank approximation and parameters sharing ideas adopted. Compared with existing Transformer based methods, our model achieved higher compression ratio and got better experimental results, particularly in language modeling task. These evidences imply that our method can potentially be further applied to more NLP tasks with limited resources.

In the future, we will continue to optimize the Tensorized Transformer framework and apply it in other NLP tasks. As we stated earlier, our model may suffer from overfitting when the number of cores is large in language modeling. In the future, we will explore the fundamental
reasons that cause the problem and tackle them within the Tensorized Transformer framework.
 
\section{Acknowledgement}
This work is supported in part by the state key development program of China (grant No. 2017YFE0111900, 2018YFC0831704), Natural Science Foundation of China (grant No. 61772363, U1636203), and the European Unions Horizon 2020 research and innovation programme under the Marie SkodowskaCurie grant agreement No.721321.

\medskip
\small
\bibliographystyle{plain}

\appendix
\section{Tensor and Tensor Slice}
As introduced in~\cite{cichocki2009nonnegative}, a tensor and the tensor slice can be defined as follows.

\begin{definition}[tensor]
Let $D_1$, $D_2$, $\ldots$, $D_N$$\in \mathbb{N}$ denote index upper bounds. A tensor $\mathcal{A} \in \mathbb{R}^{D_1,\ldots,D_N}$ of order $N$ is an $N$-way array where elements $\mathcal{A}_{d_1,d_2,\ldots,d_n}$ are indexed by $d_n \in \{ 1,2, \ldots, D_n\} $ for $1 \le n \le N$. 
\end{definition}

The concept of tensor slice is specified as:
\begin{definition}[tensor slice]
 A tensor slice is a two-dimensional section (fragment) of a tensor, obtained by fixing all indexes except for two indexes.
\end{definition} 
\section{Theorem 3.1}
\label{theorem31}
Let $\bm{e}_1, \ldots, \bm{e}_n$ be basis vectors from the vector space $S$. Assume that these vectors $\bm{e}_1,\ldots,\bm{e}_n$ are linear independent and $Q$,$K$,$V$ can be linearly represented by this set of basis vectors. The output of self-attention function in Eq.~2 (in the paper) can be represented by a linear combination of the set of these basis vectors.

\begin{equation}
\label{eq41}
Attention(Q,K,V) = (\bm{e}_1, \ldots, \bm{e}_n)M,
\end{equation}
where $M \in \mathbb{R}^{n\times d}$ is a coefficient matrix, and $d$ is a dimension of these matrices (i.e., $Q,~K$, and $V$).

\begin{proof}
If $Q$, $K$ and $V$ $\in$ Span($\bm{e}_1, \ldots, \bm{e}_n$), the linear combination representation of matrices $Q$,$K$ and $V$ can be written as follows:
\begin{equation}
\left\{
  \begin{array}{lr}
  Q =  \left(\bm{e}_1,\bm{e}_2,\ldots,\bm{e}_n\right) (\bm{\alpha}_1,\bm{\alpha}_2,\ldots,\bm{\alpha}_d) \\
  K =  \left(\bm{e}_1,\bm{e}_2,\ldots,\bm{e}_n\right)(\bm{\beta}_1,\bm{\beta}_2,\ldots,\bm{\beta}_d) \\
  V =  \left(\bm{e}_1,\bm{e}_2,\ldots,\bm{e}_n\right)(\bm{\xi}_1,\bm{\xi}_2,\ldots,\bm{\xi}_d)
  \end{array}
\right.
\end{equation}
The self-attention function is written as follows~\cite{vaswani2017attention}:
\begin{equation}
Attention(Q,K,V)=softmax(\frac{QK^T}{\sqrt{d}})V,
\end{equation}
where $QK^T$ can be computed as follows:
\begin{equation}
\begin{aligned}
QK^T = \left(\bm{e}_1,\bm{e}_2,\ldots,\bm{e}_n\right) (\bm{\alpha}_1,\bm{\alpha}_2,\ldots,\bm{\alpha}_d)(\bm{\beta}_1,\bm{\beta}_2,\ldots,\bm{\beta}_d)^{T}\left(\bm{e}_1,\bm{e}_2,\ldots,\bm{e}_n\right)^T
\end{aligned}
\label{td}
\end{equation}
As a result, the input of $softmax$ function is a product of coefficient matrices $(\bm{\alpha}_1,\ldots, \bm{\alpha}_d)$ and $(\bm{\beta}_1, \ldots,\bm{\beta}_d)^T$.
Then, we have 
\begin{equation}
\label{linear}
softmax(\frac{QK^T}{\sqrt{d}})=(\bm{e}_1,\ldots,\bm{e}_n)softmax(A/\sqrt{d})(\bm{e}_1,\ldots,\bm{e}_n)^T
\end{equation}
where the matrix $A$ is equal to $(\bm{\alpha}_1,\ldots,\bm{\alpha}_d)(\bm{\beta}_1,\ldots,\bm{\beta}_d)^T$. Therefore, the attention representation can be written as follows:
\begin{equation}
\begin{aligned}
softmax(\frac{QK^T}{\sqrt{d}})V &= \left(\bm{e}_1,\bm{e}_2,\ldots,\bm{e}_n\right)softmax(A/\sqrt{d})(\bm{\xi}_1,\bm{\xi}_2,\ldots,\bm{\xi}_d)\\
& = \left(\bm{e}_1,\bm{e}_2,\ldots,\bm{e}_n\right)M
\end{aligned}
\end{equation}
where the matrix $M$ is equal to $softmax(A/\sqrt{d})(\bm{\xi}_1,\bm{\xi}_2,\ldots,\bm{\xi}_d)$.
The $softmax(A/\sqrt{d})$ is to normalize the coefficient matrices of $Q$ and $K$.
It turns out that the output of the attention function~\cite{vaswani2017attention} can be represented by a linear combination of the set of basic vectors. 
\end{proof}

After the proof, it is helpful to describe the basic idea. 
First, we consider that the self-attention function can be linearly represented by a set of orthogonal basis vectors, when the input of $softmax$ function is the product of two coefficient matrices, $(\bm{\alpha}_1,\bm{\alpha}_2,\ldots,\bm{\alpha}_d)$ and $(\bm{\beta}_1,\bm{\beta}_2,\ldots,\bm{\beta}_d)^T$, respectively.
Second, in constructing the multi-head mechanism, the matrices of basis vectors $\left(\bm{e}_1,\bm{e}_2,\ldots,\bm{e}_n\right)$ can be shared.

\section{Corollary 1}
\label{corollary1}
Under the same conditions as in Theorem 3.1 and the value of $N$ is equal to the value of $d$, the Single-block attention representation Eq.~$5$ (in the paper) can reconstruct the {\it{Scaled Dot-Product attention}} in Eq.~$2$ (in the paper) by the summing over the tensor (i.e., the output of Single-block attention function) according to the second index. It holds that:
\begin{equation}
\label{ddddd}
Attention(Q,K,V)_{i,m} = \sum_{j=1}^N Atten_{TD}(\mathcal{G};Q,K,V)_{i,j,m},
\end{equation}
where $i$, $j$ and $m$ are the indices of the Single-block attention output (i.e., a 3-order tensor). $Atten_{TD}(\cdot)$ is the function of the Single-block attention based on Tucker decomposition. $i$ and $m$ are the indices of outputs (i.e., a matrix) from Eq.~$2$ (in the paper).

\begin{proof}
In Theorem~\ref{theorem1}, we have proved the results about the attention function can be represented by a linear combination of basis vectors. Therefore, we can represent the self-attention function~in Eq.~2 (in the paper) by the form as follows:
\begin{equation}
Attention(Q,K,V) = \Theta QK^{T}V
\end{equation}
where $\Theta$ is a normalization factor matrix, which can be used to replace the use of a $sofmax$ function. We assume that $\Theta$ contains all the non-zero elements of the core tensor $\mathcal{G}$.
The self-attention in Eq.~2 (in the paper) can be re-written as follows:
\begin{equation}
\label{oldatt}
X_{i,m}=\sum_{k=1}^{N}\sum_{r=1}^{R}{\Theta}_{i,m}Q_{i,r}K_{k,r}V_{k,m}
\end{equation}
where $N$ is the length of a sentence, $X_{i,m}=Attention(Q,K,V)_{i,m}$ is the entry of the output from the self-attention, and $R$ is equal to $d$.
Here the core tensor $\mathcal{G}$ is same as that in Eq.~$7$ (in the paper).
Then, the Single-block attention (a $3$-order tensor) can be represented as follows:
\begin{equation}
  \mathcal{A}_{i,j,m}=\sum_{p}^{R}\sum_{q}^{R}\sum_{r}^{R}\mathcal{G}_{p,q,r}Q_{i,p}K_{j,p}V_{m,r} 
  \label{newatt}
\end{equation}
where $\mathcal{A}$ is a $3$-order tensor, 
which is equal to $Atten_{TD}(\mathcal{G};Q,K,V)$. 
Accordingly, $\mathcal{A}_{i,j,m}$ 
is a entry in tensor $\mathcal{A}$ and is equal to ${Attention_{TD}}_{i,j,m}$ in Eq.~\ref{ddddd}. 
Next, we aim to prove Eq.~\ref{dddd} can be established. Therefore, we need to establish the relation between
Eq.~\ref{newatt} and Eq.~\ref{oldatt}. Since the core tensor $\mathcal{G}$ is a special tensor (i.e., diagonal tensor), Eq.~\ref{newatt} can be written as follows:
\begin{equation}
\mathcal{A}_{i,j,m}=\sum_{r=1}^{R}\mathcal{G}_{r,r,r}Q_{i,r}K_{j,r}V_{m,r}
\end{equation}
After that, we can compute the attention representation through adding to model $k$. For better understanding, we give the graph representation in Figure~\ref{Appendix}.
\begin{figure}[t]
\centering
\includegraphics[width=4.5in]{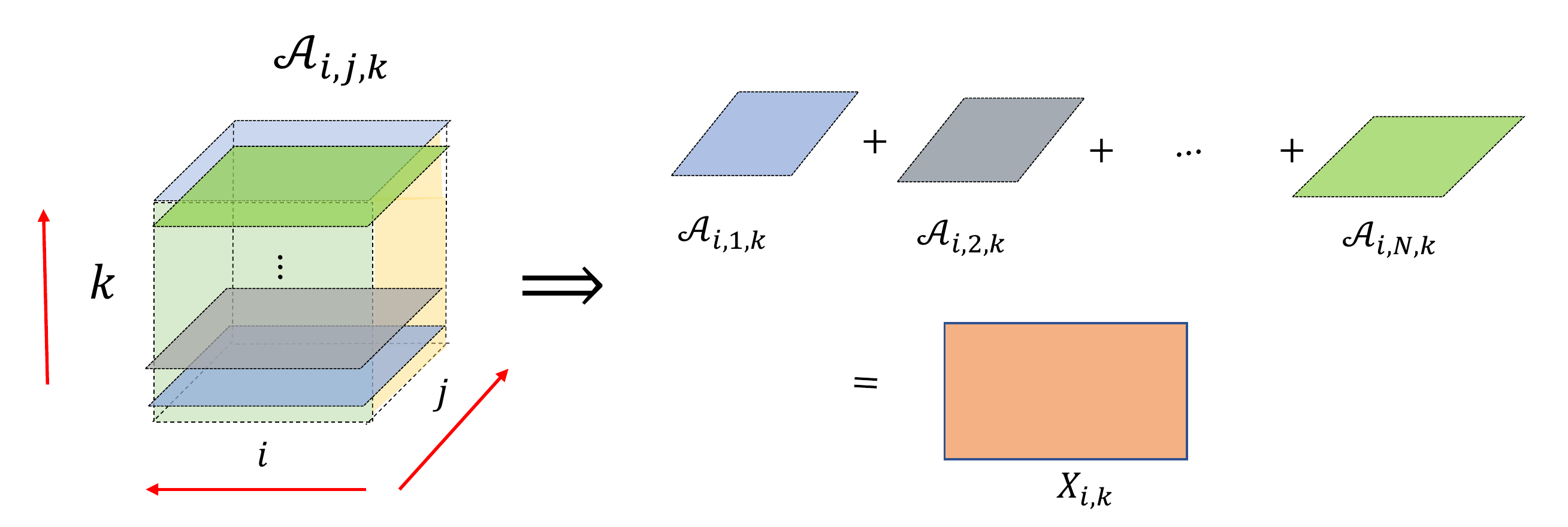}
\caption{ Tensor $\mathcal{A}$ is a $3$-order tensor, which represents the Single-block attention in the left. $\mathcal{A}_{i,j,k}$ is the entry of the tensor $\mathcal{A}$. In the right, the graph represents that 
the summing of tensor slices which is from the tensor splitting in index $j$. This graph can help us to understand the main content of corollary~\ref{corollary}.}
\label{Appendix}
\end{figure}

$$X_{i,m} = \sum_{r=1}^{R}\sum_{j=1}^{N}\mathcal{G}_{rrr}Q_{i,r}K_{j,r}V_{m,r}$$
The corollary then holds.
\end{proof}
\section{Compression Ratio about Multi-Linear Attention}
In order to compute the compression ratio, we need to compare multi-linear attention with multi-head attention. The comparison chart has been given in Figure~\ref{CompressionRatios}.

\begin{figure}
\centering 
\includegraphics[scale=0.5]{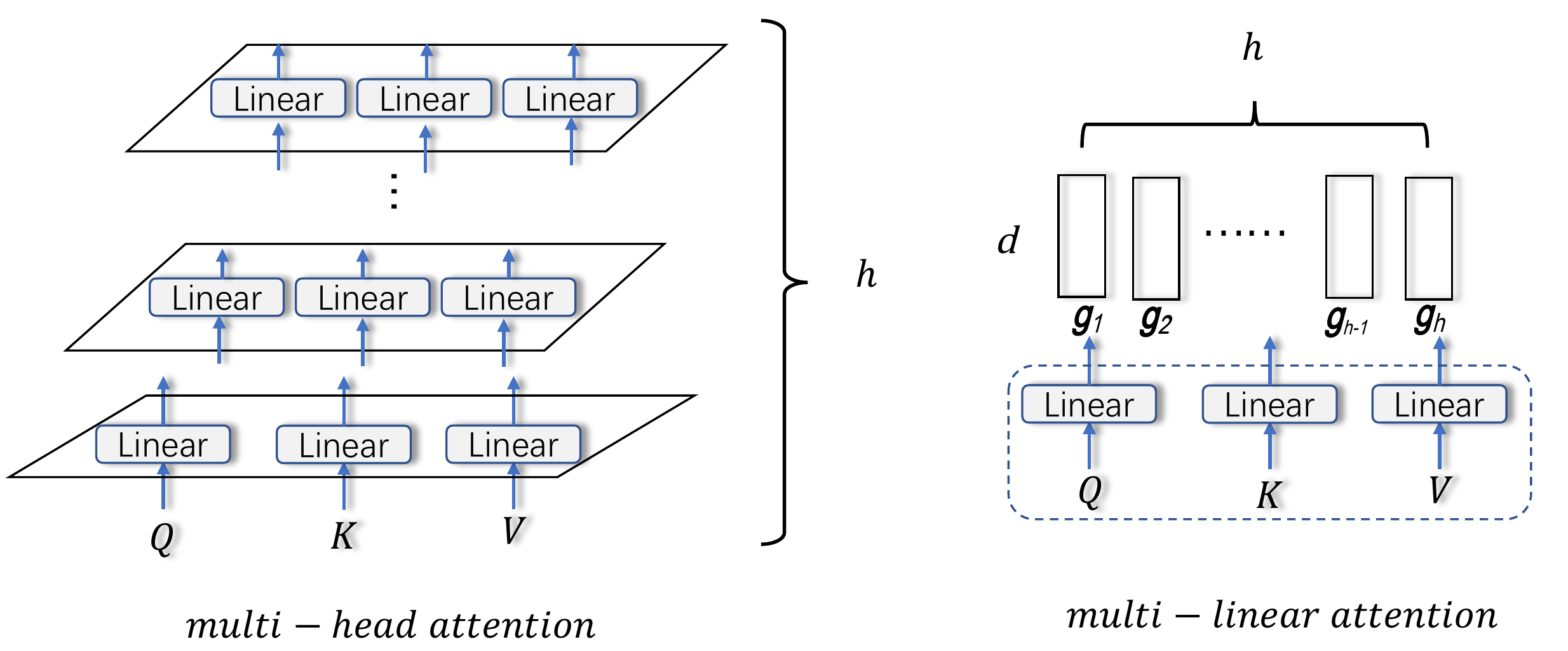}
\caption{A diagram about a comparison of parameters between multi-linear attention and multi-head attention.}
\label{CompressionRatios}
\end{figure}

In Figure~\ref{CompressionRatios}, each $Linear$ function in multi-head attention is about a weight matrix $W \in \mathbb{R}^{d_{model}\times d}$, and all weight matrices in multi-head attention are different. In multi-linear attention, three weight matrices are used and $h$ (a number) weight vectors are used.
Through the analysis about Figure~\ref{CompressionRatios}, the compression ratio is computed as follows.
\begin{equation}
\begin{split}
compression~ratio =&\frac{3\times h \times d_{model} \times d}{3\times d_{model} \times d + h \times d}\\
=&\frac{3\times h \times d_{model}}{3 \times d_{model} + h}
\end{split}
\end{equation}
In practice, $h$ is equal to $8$ and $d$ is equal to $512$. The compression ratio approximates $8$ in this case. In our work, the dimension of vector $\mathcal{G}_r$ is set as $R$ which is smaller than $d$, where $d$ is the dimension of attention matrix. 

\textbf{Low-rank Approximation for Model Compression}
In the paper, we have described that our method combines two compression ideas, namely low-rank approximation and parameters sharing. 
Parameters sharing can be understood through the description of Figure~\ref{CompressionRatios}. 
In Multi-linear attention, the idea of low-rank decomposition also has the function of model compression.
We have proved that the Single-block attention can re-construct an one-head self-attention in Transformer.
In order to obtain the representation of a tensorized attention, we adopt the tensor splitting and the concat function. 
After that, we consider that each tensor slice from tensor splitting approximates the output of the self-attention function~Eq.~2 (in the paper). When we only focus on the idea of low-rank approximation, the compression ratio can be computed by the form, $\frac{N\times d}{N \times N}$, where $N$ is the length of a sequence, $d$ is the dimension of a matrix (also namely hidden size). $N$ is smaller than $d$, normally.

Through combining the ideas of parameters sharing and low-rank approximation, by formally considering the rank $R$, the compression ratio of Multi-linear attention model can be computed as follows:
\begin{equation}
    compression~ ratio_{R} = \frac{3\times h \times d_{model} \times d}{3\times d_{model}\times d + R \times h},
\end{equation}
where $R$ is the rank of the core tensor $\mathcal{G}$. The compression ratio will be larger when $R$ is smaller. This $compression~ratio_R$ is the compression ratio associated with $R$.
 $R$ need to be set in practice. In experiments, $R$ can be set to 
$18$, which is smaller than $d_{model}$.

\section{Experiment}
\subsection{Partial Structure about Tensorized Transformer}
in the paper, the multi-linear attention is proposed. In order to show that the process of incorporating multi-linear attention into Transformer, Figure~\ref{Incorporation} gives out some information about the structure.

\subsection{Experimental Details in Language Modeling}

\begin{figure}
\centering 
\includegraphics[scale=0.5]{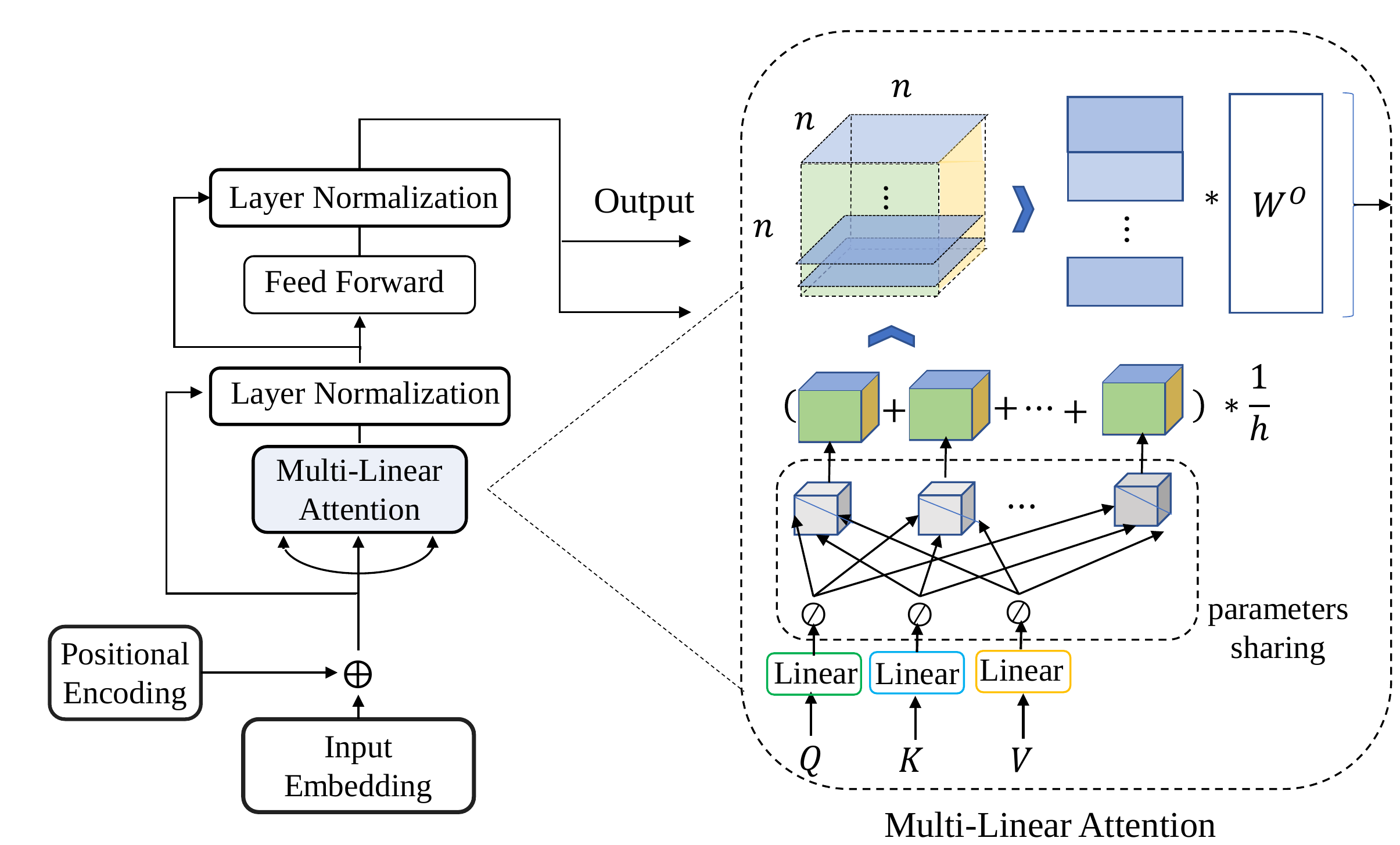}
\caption{A diagram which is about the incorporating of multi-linear attention in partial Transformer structure. The parameters are shared in the constructing of each single-block attention.}
\label{Incorporation}
\end{figure}


Now, we report some details of experiments as a relevant supplementary material. Firstly, we use three weight matrices $W^{q}$,$W^{k}$ and $W^{v}$ to linearly project the queries, keys and values. The outputs from the linear projections can be shared by $h$ times, where $h$ is the number of core tensors in our background (i.e., core-1($h$=$1$), core-2($h$=$2$)). We use Block Term Tensor decomposition (BTD) to construct a new representation, namely Multi-linear attention, which is a $3$-order tensor. For incorporating the proposed attention into the architecture of Transformer, we split the $3$-order tensor, and then concat each matrix from the tensor. For other layers, we use the same structure as vanilla-Transformer.

\textbf{Hardware}

We trained our model on one machine with 2 NVIDIA P40 GPUs. For our base models, the hyperparameters are described in Table~\ref{hyperparameters}. In addition, we set the $dropout$=$0.3$ in all datasets. The model is trained using $30$ epochs in three datasets (PTB, WikiText-103 and One-Billion).
\begin{table}[h]\small
\centering 
\caption{The hyperparameters in the Tensorized Transformers model}
\begin{tabular}{cccccccc}
\toprule[1pt]
Datasets & $d_{head}$ & $d_{ff}$ & h & L& $d_k$ & $d_v$&Test PPL\\
\midrule[0.5pt]
PTB          &256& 2100 & 2 & 3 & 40 &   40  & 49.8\\
WikiText-103 &256& 2100 & 2 & 6 & 40 & 40  &18.9\\
One-Billion  &1024& 2100 & 2 & 6 & 40 & 40 &19.5\\
\bottomrule[1pt]
\end{tabular}
\label{hyperparameters}
\end{table}

\textbf{Optimizer}
We used the Adam optimizer and vary the learning rate over the course of training. The vary formula~\cite{vaswani2017attention} is followed in our work. We also used the $warmup\_steps=4000$. \textbf{Label Smoothing} is employed with the value $\epsilon$=$0.1$.

\subsection{Experiment Details in Neural Machine Translation} 
The Tensorized Transformer also has been applied to Neural Machine Translation task. In this experiment, we use the same setup with Transformer~\cite{vaswani2017attention}, and replace the multi-head attention with the proposed multi-linear attention in the encoder structure. In the decoder structure, we still use the multi-head attention for verifying the effectiveness of encoding a sentence. The model is trained in 1 NVIDA P40 GPUs.

\subsection{Experimental comparison}
For a more detailed comparison, we design these experiments as follows. In this section, we mainly show the experimental results on two language modeling datasets, i.e., PTB and WikiText-103. We show the value of perplexity, as well as FLOPs\footnote{FLOPs:The number of floating-point operations} correspondingly.

\begin{table*}[pthb]\small
\renewcommand\arraystretch{1.2}
  \centering
  \begin{tabular}{llllllll}
    \toprule[1pt]
    \multirow{2}{*}{\textbf{Model}}& \multicolumn{3}{c}{\textbf{PTB}} & \multicolumn{3}{c}{\textbf{WikiText-103}}\\
    \cline{2-7}
    &Params&FLOPS&Test PPL&Params&FLOPS&Test PPL\\
    \hline 
    Transformer-XL~\cite{dai2019transformer} & 24M & 11.5B & 59.1              &257M &996.5B& 18.3\\
    Tensorized Transformer & 24M & 5.4B & 52.7                                 &257M &312.0B& 21.2\\
    \hline 
    Transformer-XL~\cite{dai2019transformer} & -- & --& --                     &151M &126.5B& 24.0\\
    Tensorized Transformer &-- & -- &  --                                      &151M &83.4B& 18.8\\
    \hline 
    Transformer-XL~\cite{dai2019transformer} & 12M & 4.5B & 87.8               &85.5M &22.0B&34.8\\
    Tensorized Transformer &12M & 0.75B &  57.9                                &85.5M &17.3B& 20.9\\

    \bottomrule[1pt]
  \end{tabular}
  \caption{Experimental comparisons on PTB and WikiText-103.}
  \label{AppenT}
  \vspace{-11px}
\end{table*}

\begin{table}[h]\small
\centering 
\caption{The same hyperparameters in Tensorized Transformers and Transformer-XL, $N$ is the length of sequence, and $L$ is the number of layers.}
\begin{tabular}{cccccccc}
\toprule[1pt]
Datasets     & Model                 &$d_{head}$ & $d_{model}$ & $N$ & $L$  & dropout & Test PPL\\
\midrule[0.5pt]
PTB          &Transformer-XL         &40         & 256         & 30 & 3 & 0.3     &81.2\\
PTB          &Tensorized Transformer &40         & 256         & 30 & 3 & 0.3     &50.2\\
WikiText-103 &Transformer-XL         &40         & 256         & 80 & 6 & 0.1     &34.86\\
WikiText-103 &Tensorized Transformer &40         & 256         & 80 & 6 & 0.1     &19.9\\
One-Billion  &Transformer-XL         &40         & 1024        & 100 & 6 & 0.1    &43.6\\
One-Billion  &Tensorized Transformer &40         & 1024        & 100 & 6 & 0.1    &26.7\\

\bottomrule[1pt]
\end{tabular}
\label{samehyper}
\end{table}
To further compare the experimental results under the same size of parameters between Tensorized Transformer and the baseline model (i.e., Transformer-XL), we add some experiments in Table~\ref{AppenT}. In our paper, we use the Tensorized Transformer of 12M on PTB dataset and the Tensorized Transformer of 85.5M on WikiText-103 dataset. To achieve the same size, i.e., 12M for Transformer-XL on PTB, we can reduce the dimensions of $Q,K,V$ from $40$ to $26$. To achieve Transformer-XL of 85.5M, we reduce the dimensions of word embedding from $512$ to $256$. The experimental results are shown in Table~\ref{AppenT}. Our model gets better results and the lower FLOPS than Transformer-XL.

On the other hand, we can also increase the parameters of Tensorized Transformer to reach the parameters reported by Transformer-XL on PTB and WikiText-103 datasets. On PTB dataset, we can increase the number of layers from $3$ to $7$ to get the Tensorized Transformer of 24M. On WikiText-103 dataset, we increase the number of layers from $6$ to $11$ and the length of sequence from $80$ to $120$ to get the Tensorized Transformer of 257M. We can increase the number of layers from $6$ to $8$ and the length of sequence from $80$ to $100$ to get the Tensorized Transformer of 151M.
After that, Tensorized Transformer achieves better results and lower FLOPS than Transformer-XL. These results are shown in Table~\ref{AppenT}.

In addition, we also carry out experiments when Transformer-XL has the same hyperparameters with Tensorized Transformer. Experimental results are shown in Table~\ref{samehyper}. Table~\ref{samehyper} shows that our model can get the better results than Transformer-XL. 
Besides, on two datasets (i.e., PTB and WiliText-103), we also try to train our model (Tensorized Transformer) using larger dimension of word embedding (i.e., $d_{model}$). If $d_{model}$ is larger than $256$ on PTB dataset and larger than $512$ on WikiText-103 dataset, the overfitting will occur. For the overfitting problem, we will investigate it in our future work.

\section{Partial Code}
The project have been achieved by pytorch. In this section, we give the partial code which is about our methods, i.e., Sing-block attention and Multi-linear attention. First, the class of Single-block attention is given as follows.
\begin{lstlisting}
import torch
import torch.nn as nn
import torch.nn.init as init
import numpy as np

class SingleBlockAttention(nn.Module):
    '''Single block attention'''
    def __init__(self, Rank):
        super(SingleBlockAttention, self).__init__()
        self.softmax = nn.Softmax()
        self.R = Rank
    def forward(self, q, k, v, mb_size,d):
        self.core = nn.Parameter(torch.FloatTensor(np.random.rand(self.R)))
        N = v.size(1)
        self.core = self.softmax(self.R)
        core_tensor = torch.zeros(N,d,N).cuda()
        for i in range(self.R):
            cores_tensor[i][i][i] = self.core[i]
        full_matrixs = []
        for i in range(mb_size):
            full_matrix_1 = torch.einsum('pqk, ip,jq,kr->ijr', [core_tensor, q[i], k[i], v[i]]).contiguous()
            full_matrixs.append(torch.sum(full_matrix_1, dim=1))
        output = torch.stack(full_matrixs).cuda().float()
        return output
\end{lstlisting}

Each Single block attention is a component of Multi-linear attention. Based on the Single block attention, the Multi-linear attention can be given as follows.

\begin{lstlisting}
class MultiLinearAttention(nn.Module):
    
    ''' MultiLinearAttention '''

    def __init__(self, h, Rank, d, dropout=0.1):
        super(MultiLinearAttention, self).__init__()
        self.n_head = h # h is equal to 2 in our model
        self.d_k = d
        self.d_v = d
        self.w_q = nn.Parameter(torch.FloatTensor(d_model, d_k))
        self.w_k = nn.Parameter(torch.FloatTensor(d_model, d_k))
        self.w_v = nn.Parameter(torch.FloatTensor(d_model, d_v))
        self.Tattention = SingleCoreAttention(Rank)
        self.layer_norm = LayerNormalization(Rank)
        self.proj = Linear(self.n_head*d, Rank)
        self.dropout = nn.Dropout(dropout)
        init.xavier_normal_(self.w_q)
        init.xavier_normal_(self.w_k)
        init.xavier_normal_(self.w_v)

    def forward(self, q, k, v):

        d_k, d_v = self.d_k, self.d_v
        n_head = self.n_head
        residual = q
        mb_size, len_q, d_model = q.size()
        mb_size, len_k, d_model = k.size()
        mb_size, len_v, d_model = v.size()
        q_s = q.repeat(1, 1).view(-1, d_model)
        k_s = k.repeat(1, 1).view(-1, d_model) 
        v_s = v.repeat(1, 1).view(-1, d_model) 
        if n_head > 1:
          output_1 = self.Tattention(q_s, k_s, v_s, mb_size,d_v)
          output_2 = self.Tattention(q_s, k_s, v_s, mb_size,d_v)
          output = (output_1+output_2)*0.5
        else:
          ouput = self.Tattention(q_s, k_s, v_s, mb_size,d_v)
        # project back to residual size
        outputs = self.proj(outputs)
        outputs = self.dropout(outputs)
        return self.layer_norm(outputs + residual)
        
\end{lstlisting}

\end{document}